\def\eqref#1{equation~\ref{#1}}
\def\1{\bm{1}}
\DeclareMathAlphabet{\mathsfit}{\encodingdefault}{\sfdefault}{m}{sl}
\SetMathAlphabet{\mathsfit}{bold}{\encodingdefault}{\sfdefault}{bx}{n}
\newtheorem{prop}{Proposition}
\title{Network Deconvolution}
\author{
  \!Chengxi Ye\thanks{Corresponding Author}, \quad  Matthew Evanusa, \quad  Hua He, \quad Anton Mitrokhin,
\vspace{-3mm}   \AND Tom Goldstein, \quad James A. Yorke\thanks{Institute for Physical Science and Technology}, \quad Cornelia Ferm\"{u}ller, \quad Yiannis Aloimonos    \vspace{4mm}\\
  Department of Computer Science,
  University of Maryland,
  College Park\\
  \texttt{ \{cxy, mevanusa, huah, amitrokh\}@umd.edu} \\
  \texttt{ \{tomg@cs,yorke@,fer@umiacs,yiannis@cs\}.umd.edu} \\
}
\begin{document}

\maketitle

\begin{abstract}
Convolution is a central operation in Convolutional Neural Networks (CNNs), which applies a kernel to overlapping regions shifted across the image. 
However, because of the strong  correlations in real-world image data, convolutional kernels are in effect re-learning redundant data.
In this work, we show that this redundancy has made neural network training challenging, and propose \emph{network deconvolution}, a procedure which optimally removes pixel-wise and channel-wise correlations before the data is fed into each layer. Network deconvolution can be efficiently calculated at a fraction of the computational cost of a convolution layer. We also show that the deconvolution filters in the first layer of the network resemble the center-surround structure found in biological neurons in the visual regions of the brain.
Filtering with such kernels results in a sparse representation, a desired property that has been missing in the training of neural networks. Learning from the sparse representation promotes faster convergence and superior results \textit{without} the use of batch normalization. We apply our network deconvolution operation to $10$ modern neural network models by replacing batch normalization within each. Extensive experiments show that the network deconvolution operation is able to deliver performance improvement in all cases on the CIFAR-10, CIFAR-100, MNIST, Fashion-MNIST, Cityscapes, and ImageNet datasets.

\end{abstract}

\section{Introduction}

Images of natural scenes that the human eye or camera captures contain adjacent pixels that are statistically highly correlated~\citep{Olshausen1996,Hyvrinen:2009:NIS:1572513}, which can be compared to the
correlations  introduced by \emph{blurring} an image with a Gaussian kernel. We can think of images as being convolved by an unknown filter (Figure \ref{fig:deconv_motivations}). The correlation effect complicates object recognition tasks and makes neural network training challenging, as adjacent pixels contain redundant information.

It has  been discovered that there exists a visual correlation removal processes in animal brains. Visual neurons called retinal ganglion cells and lateral geniculate nucleus cells have developed "Mexican hat"-like circular center-surround receptive field structures to reduce visual information redundancy, as found in Hubel and Wiesel's famous cat experiment~\citep{doi:10.1113/jphysiol.1961.sp006635,hubel1962receptive}. Furthermore, it has been argued that data compression is an essential and fundamental processing step in natural brains, which inherently involves removing redundant information and only keeping the most salient features \citep{richert2016fundamental}.

In this work, we introduce  \emph{network deconvolution}, a method to reduce redundant correlation in images. Mathematically speaking, a correlated signal is generated by a convolution: $b=k*x=Kx$ (as illustrated in Fig.~\ref{fig:deconv_motivations} right), where $k$ is the kernel and $K$ is the corresponding convolution matrix. The purpose of network deconvolution is to remove the correlation effects via: $x=K^{-1}b$, assuming $K$ is an invertible matrix.


Image data being fed into a convolutional network (CNN) exhibit two types of correlations. Neighboring pixels in a single image or feature map have high \emph{pixel-wise correlation}. Similarly, in the case of different channels of a hidden layer of the network, there is a strong correlation or "cross-talk" between these channels; we refer to this as \emph{channel-wise correlation}. The goal of this paper is to show that both kinds of correlation or redundancy hamper effective learning. Our \emph{network deconvolution} attempts to remove both correlations in the data at every layer of a network.

\begin{figure}
\begin{center}
\includegraphics[width=5in]{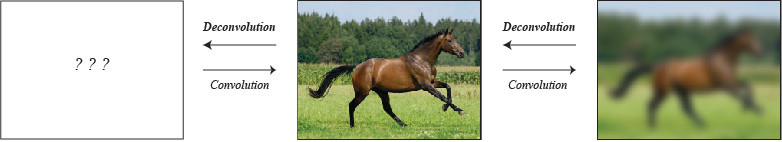}
\caption[Network deconvolution]{Performing convolution on this real world image using a correlative filter, such as a Gaussian kernel, adds correlations to the resulting image, which makes object recognition more difficult. The process of removing this blur is called $deconvolution$. What if, however, what we saw as the real world image was \emph{itself} the result of some unknown correlative filter, which has made recognition more difficult? Our proposed network deconvolution operation can decorrelate underlying image features which allows neural networks to perform better.}
\label{fig:deconv_motivations}
\vspace{-1.5\baselineskip}

\end{center}
\end{figure}

Our contributions are the following:
\begin{itemize}
\item  We introduce  {\it network deconvolution}, a decorrelation method 
to remove both the pixel-wise and channel-wise correlation at each layer of the network.
\item Our experiments show that deconvolution can replace batch normalization as a generic procedure in a variety of modern neural network architectures with better model training. 
\item We prove that this method is the optimal transform if considering  $L_2$ optimization.
\item Deconvolution has been a misnomer in convolution architectures. We demonstrate that network deconvolution is indeed a \textit{deconvolution} operation.
\item We show that network deconvolution reduces redundancy in the data, leading to sparse representations at each layer of the network.
\item  We propose a novel implicit deconvolution and subsampling based acceleration technique allowing the deconvolution operation to be done at a cost fractional to the corresponding convolution layer.
\item  We demonstrate  the network deconvolution operation improves  performance  in comparison to batch normalization on the CIFAR-10, CIFAR-100, MNIST, Fashion-MNIST, Cityscapes, and ImageNet datasets, using  10 modern neural network models. 
\end{itemize}

\section{Related Work}

\subsection{Normalization and Whitening}
Since its introduction, batch normalization has been the main normalization technique~\citep{ioffe2015batch} to facilitate the training of deep networks using stochastic gradient descent (SGD). Many techniques have been introduced to address cases for which batch normalization does not perform well. These include training with a small batch size~\citep{DBLP:journals/corr/abs-1803-08494}, and on recurrent networks~\citep{DBLP:journals/corr/SalimansK16}. However, to our best knowledge, none of these methods has demonstrated improved performance on the ImageNet dataset.

In the signal processing community, our network deconvolution could be referred to as a \emph{whitening deconvolution}. There have been multiple complicated attempts to whiten the feature channels and to utilize second-order information. For example, authors of \citep{DBLP:journals/corr/MartensG15,NIPS2015_5953} approximate second-order information using the Fisher information matrix. There, the whitening is carried out interwoven into the back propagation training process. 

The correlation between feature channels has recently been found to hamper the learning. Simple approximate whitening can be achieved by removing the correlation in the channels. Channel-wise decorrelation was first proposed in the backward fashion~\citep{DBLP:journals/corr/abs-1708-00631}. Equivalently, this procedure can also be done in the forward fashion by a change of coordinates~\citep{DBLP:journals/corr/abs-1809-08625,huang2018decorrelated,DBLP:journals/corr/abs-1904-03441}. However, none of these methods has captured the nature of the \textit{convolution} operation, which specifically deals with the \textit{pixels}. Instead, these techniques are most appropriate for the standard linear transform layers in fully-connected networks. 

\subsection{Deconvolution of DNA Sequencing Signals}
Similar correlation issues also exist in the context of DNA sequencing~\citep{10.1093/bioinformatics/btu010} where many DNA sequencers start with analyzing correlated signals. There is a cross-talk effect between different sensor channels, and signal responses of one nucleotide base spread into its previous and next nucleotide bases. As a result the sequencing signals display both channel correlation and pixel correlation. A blind deconvolution technique was developed to estimate and remove kernels to recover the unblurred signal.

\section{Motivations}
\subsection{Suboptimality of Existing Training Methods}
Deep neural network training has been a challenging research topic for decades. Over the past decade, with the regained popularity of deep neural networks, many techniques have been introduced to improve the training process. However, most of these methods are sub-optimal even for the most basic linear regression problems. 

Assume we are given a linear regression problem with $L_2$ loss (Eq.~\ref{L2_loss}).  In a typical setting, the output $y=X w$ is given by multiplying the inputs $X$ with an unknown weight matrix $w$, which we are solving for. In our paper, with a slight abuse of notation, $X$ can be the data matrix or the augmented data matrix $(X|1)$.

\begin{equation}
Loss_{L_2}=\frac{1}{2}\| y -\hat{y} \|^2=\frac{1}{2}\| Xw -\hat{y} \|^2.
\label{L2_loss}
\end{equation}

Here $\hat{y}$ is the response data to be regressed. With neural networks, gradient descent iterations (and its variants) are used to solve the above problem. To conduct one iteration of gradient descent on Eq.~\ref{L2_loss}, we have:
\begin{equation}
w_{new}=w_{old}-\alpha \frac{1}{N} (X^tXw_{old}-X^t \hat{y}).
\label{gd}
\end{equation}

Here $\alpha$ is the step length or learning rate. Basic numerical experiments tell us these iterations can take long to converge. Normalization techniques, which are popular in training neural networks, are beneficial, but are generally not optimal for this simple problem. If methods are suboptimal for simplest linear problems, it is less likely for them to be optimal for more general problems. Our motivation is to find and apply what is linearly optimal to the more challenging problem of network training. 

For the $L_2$ regression problem, an optimal solution can be found by setting the gradient to $0$: $\frac{\partial Loss_{L_2}}{\partial w}=X^t(Xw-\hat{y})$ = 0
\begin{equation}
    w=(X^t X)^{-1} X^{t} \hat{y}
\label{normal_eq}
\end{equation}

Here, we ask a fundamental question: When can gradient descent converge in \textit{one single iteration}?

\begin{prop} Gradient descent converges to the optimal solution in one iteration if $\frac{1}{N} X^tX=I$.
\end{prop}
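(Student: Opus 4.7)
The plan is to substitute the hypothesis $\frac{1}{N}X^tX = I$ directly into the gradient descent update (\ref{gd}) and show that, with a suitable choice of step length, a single step lands exactly on the closed-form minimizer given by the normal equation (\ref{normal_eq}). The argument is essentially a one-line algebraic collapse, so the real work is choosing the right step length and verifying consistency with (\ref{normal_eq}).

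First, I would replace $\frac{1}{N}X^tX$ in (\ref{gd}) by $I$, which gives
\begin{equation*}
w_{new} = w_{old} - \alpha\bigl(w_{old} - \tfrac{1}{N}X^t\hat{y}\bigr) = (1-\alpha)\,w_{old} + \alpha\,\tfrac{1}{N}X^t\hat{y}.
\end{equation*}
Next, I would set the learning rate $\alpha = 1$. The $w_{old}$ contribution vanishes, and we obtain $w_{new} = \tfrac{1}{N} X^t\hat{y}$, a value that depends neither on the initialization nor on the history of the iteration.

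Finally, I would verify that this $w_{new}$ agrees with the optimal $w$ from (\ref{normal_eq}). Under the hypothesis, $X^tX = NI$, so $(X^tX)^{-1} = \tfrac{1}{N}I$, and therefore the normal-equation solution reduces to $w = (X^tX)^{-1}X^t\hat{y} = \tfrac{1}{N}X^t\hat{y}$, which coincides exactly with $w_{new}$. Hence one step of gradient descent with $\alpha=1$ reaches the optimum, proving the claim.

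There is essentially no obstacle here beyond bookkeeping: the assumption $\frac{1}{N}X^tX = I$ is precisely the condition under which the Hessian of the $L_2$ loss is a scaled identity, so gradient descent with the matched step length inverts the quadratic in a single shot. The only thing worth being explicit about is that the result is independent of $w_{old}$, which is what makes the statement nontrivial and motivates the subsequent use of deconvolution as a preprocessing step to enforce this whitening condition approximately at every layer.
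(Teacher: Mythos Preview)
Your proof is correct and follows exactly the same approach as the paper: substitute the hypothesis into both the gradient descent update (with $\alpha=1$) and the normal-equation solution, and observe that each reduces to $\tfrac{1}{N}X^t\hat{y}$. Your version is slightly more expansive in showing the intermediate $(1-\alpha)w_{old}$ term and remarking on the Hessian interpretation, but the argument is identical in substance.
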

\begin{proof}
Substituting $\frac{1}{N} X^tX=I$, into the optimal solution (Eq.~\ref{normal_eq}) we have $w=\frac{1}{N} X^t\hat{y}$.

On the other hand, substituting the same condition with $\alpha=1$ in Eq.~\ref{gd} we have $w_{new}=\frac{1}{N} X^t\hat{y}$. 
\end{proof}

Since gradient descent converges in one single step, the above proof gives us the optimality condition. 

$\frac{1}{N} X^tX=I$ calculates the covariance matrix of the features. The optimal condition suggests that the features should be standardized and uncorrelated with each other. When this condition does not hold, the gradient direction does not point to the optimal solution. In fact, the more correlated the data, the slower the convergence ~\citep{Richardson}. This problem could be handled equivalently (Section~\ref{sec:equivalence}) either by correcting the gradient by multiplying the $Hessian$ matrix $=\frac{1}{N} (X^tX)^{-1}$, or by a change of coordinates so that in the new space we have $\frac{1}{N} X^tX=I$. This paper applies the latter method for training convolutional networks.   

\subsection{Need of Support for Convolutions}
Even though normalization methods were developed for training convolutional networks and have been found successful, these methods are more suitable for non-convolutional operations. Existing methods normalize features  by channel or by layer, irrespective of whether the underlying operation is convolutional or not. We will show in section~\ref{sec:matrix_description} that if the underlying operation is a convolution, this usually implies a strong violation of the optimality condition.

\subsection{A Neurological Basis for Deconvolution}
Many receptive fields in the primate visual cortex exhibit center-surround type behavior. Some receptive fields, called on-center cells respond maximally when a stimuli is \textit{given} at the center of the receptive field and a \textit{lack} of stimuli is given in a circle surrounding it. Some others, called off-center cells respond maximally in the reversed way when a lack of stimuli is at the center and the stimuli is given in a circle surrounding it (Fig.~\ref{fig:cells}) ~\citep{doi:10.1113/jphysiol.1961.sp006635,hubel1962receptive}.   
It is well-understood that these center-surround fields form the basis of the simple cells in the primate V1 cortex.

If the center-surround structures are beneficial for learning, one might expect such structures to be learned in the network training process. As shown in the proof above, the minima is the same with or without such structures, so gradient descent does not get an incentive to develop a faster solution, rendering the need to develop such structures externally. As shown later in Fig.~\ref{fig:deconv_filter}, our deconvolution kernels strongly resemble center-surround filters like those in nature. 

\section{The Deconvolution Operation}

\subsection{The Matrix Representation of a Convolution Layer} \label{sec:matrix_description}
 
The standard convolution filtering $x * kernel$, can be formulated into one large matrix multiplication $Xw$ (Fig.~\ref{fig:im2col}).   
In the 2-dimensional case, $w$ is the flattened 2D $kernel$. The first column of $X$ corresponds to the flattened image patch of $x[1:H-k,1:W-k]$, where $k$ is the side length of the kernel. Neighboring columns correspond to shifted patches of $x$: $X[:,2]=vec(x[1:H-k,2:W-k+1]), ..., X[:,k^2]=vec(x[k:H,k:W])$. A commonly used function $im2col$ has been designed for this operation. Since the columns of $X$ are constructed by shifting large patches of $x$ by one pixel, the columns of $X$ are heavily correlated with each other, which strongly violates the optimality condition. This violation slows down the training algorithm~\citep{Richardson}, and cannot be addressed by normalization methods~\citep{ioffe2015batch}.  

For a regular convolution layer in a network, we generally have multiple input feature channels and multiple kernels in a layer. We call $im2col$ in each channel, and horizontally concatenate the resulting data matrices from each individual channel to construct the full data matrix, then vectorize and concatenate all the kernels to get $w$. Matrix vector multiplication is used to calculate the output $y$, which is then reshaped into the output shape of the layer. Similar constructions can also be developed for the specific convolution layers such as the grouped convolution, where we carry out such constructions for each group. Other scenarios such as when the group number equals the channel number (channel-wise conv) or when $k=1$ ($1 \times 1$ conv) can be considered as special cases. 

In Fig.~\ref{fig:im2col} (top right) we show as an illustrative example the resulting calculated covariance matrix of a sample data matrix $X$ in the first layer of a VGG network~\citep{simonyan2014very} taken from one of our experiments. 
The first layer is a $3 \times 3$ convolution that mixes RGB channels. The total dimension of the weights is $27$, the corresponding covariance matrix is $27 \times 27$. 
The diagonal blocks correspond to the pixel-wise correlation within $3\times 3$ neighborhoods. The off diagonal blocks correspond to correlation of pixels across different channels. We have empirically seen that natural images demonstrate stronger pixel-wise correlation than cross-channel correlation, as the diagonal blocks are brighter than the off diagonal blocks.

\subsection{The Deconvolution Operation}

Once the covariance matrix has been calculated, an inverse correction can be applied. It is beneficial to conduct the correction in the forward way for numerical accuracy and because the gradient of the correction can also be included in the gradient descent training.

Given a data matrix $X_{N \times F}$ as described above in section~\ref{sec:matrix_description}, where $N$ is the number of samples, and $F$ is the number of features, we calculate the covariance matrix $Cov=\frac{1}{N}(X-\mu)^T (X-\mu)$. 

We then calculate an approximated inverse square root of the covariance matrix $D=Cov^{-\frac{1}{2}}$ (see section~\ref{sec:isqrt}) and multiply this with the centered vectors $(X-\mu)\cdot D$. In a sense, we  remove the correlation effects both pixel-wise and channel-wise. If computed perfectly, the transformed data has the identity matrix as covariance: $ D^T (X-\mu)^T (X-\mu) D=Cov^{-0.5} \cdot Cov\cdot Cov^{-0.5} = I$. 

Algorithm~\ref{Deconv} describes the process to construct $X$ and $D\approx (Cov+\epsilon\cdot I)^{-\frac{1}{2}}$. Here $\epsilon\cdot I$ is introduced to improve stability. We then apply the deconvolution operation via matrix multiplication to remove the correlation between neighboring pixels and across different channels. The deconvolved data is then multiplied with $w$. The full equation becomes $y = (X-\mu) \cdot D \cdot w+b$, or simply $y = X \cdot D \cdot w$ if $X$ is the augmented data matrix  (Fig.~\ref{fig:im2col}).

We denote the deconvolution operation in the $i$-th layer as $D_i$. Hence, the input to next layer $x_{i+1}$ is:
\begin{equation}
    x_{i+1} = f_{i} \circ W_{i} \circ D_i \circ x_i, 
\end{equation}
where $\circ$ is the (right) matrix multiplication operation, $x_i$ is the input coming from the $i-$th layer, $D_i$ is the deconvolution operation on that input, $W_{i}$ is the weights in the layer, and $f_{i}$ is the activation function.

\subsection{Justifications}

\subsubsection{On Naming the Method Deconvolution}
The name network deconvolution has also been used in inverting the convolution effects in biological networks~\citep{Feizi2013NetworkDA}. We prove that our operation is indeed a generalized \textit{deconvolution} operation.
 
\begin{prop}Removal of pixel-wise correlation (or patch-based whitening) is a deconvolution operation.
\end{prop}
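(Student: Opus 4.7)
The plan is to identify the patch-whitening map $X \mapsto (X-\mu)\cdot D$ with a bank of spatial convolutions whose collective frequency response inverts (in magnitude) the blurring kernel responsible for the observed pixel correlations; the two pieces together give the deconvolution interpretation.

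First, I would unpack the data layout that was already established in Section~\ref{sec:matrix_description}. By construction, the $r$-th row of $X$ is the flattened $k\times k$ image patch centred at pixel $r$, produced by $\mathrm{im2col}$. Right-multiplying by $D$ replaces each row by $X_{r,:}\, D$, so the $j$-th entry of the new row is $\sum_{m} X_{r,m}\, D_{m,j}$. Reshaping the $j$-th column of $D$ into a $k\times k$ filter $d_j$, this entry is exactly the inner product of the patch at $r$ with $d_j$, i.e.\ the value at spatial location $r$ of the convolution $d_j * x$ of the underlying image with $d_j$. Hence the whitening step is literally a bank of $k^2$ spatial convolutions of the image, one per column of $D$, stacked as output channels. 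This already justifies calling the operation a convolution of the image; what remains is to show it is specifically an \emph{inverse} one.

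Next, I would argue that these kernels invert the blur. Under translation invariance of natural image statistics, $Cov = \tfrac{1}{N}(X-\mu)^T(X-\mu)$ has block-Toeplitz structure, and its circulant extension is diagonalised by the $2$D Fourier basis with symbol equal to the local power spectrum. Modelling the observation as $b = k * x$ with an approximately spectrally flat latent $x$ (consistent with the motivation in Figure~\ref{fig:deconv_motivations}), the Fourier symbol of $Cov$ is $|\hat k(\omega)|^2$, so $D = Cov^{-1/2}$ acts in the Fourier domain as multiplication by $1/|\hat k(\omega)|$. Applying $D$ therefore divides the amplitude spectrum of every patch by $|\hat k(\omega)|$, recovering the latent $x$ up to phase. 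This is exactly the signal-processing definition of a magnitude-whitening deconvolution, so the naming is justified.

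The main obstacle is mediating between the discrete, finite, overlapping-patch picture in which $X\cdot D$ is literally defined and the circulant/Fourier picture in which $D$ is interpretable as the inverse-blur filter: non wrap-around boundary effects and finite-$N$ sampling in $Cov$ make the block-Toeplitz identification only approximate. I would handle this by (i) invoking stationarity of natural-image statistics to treat $Cov$ as Toeplitz up to boundary terms, and (ii) emphasising that the exact identity proved in the first paragraph — right-multiplication by $D$ equals convolving the image with the columns of $D$ — is what is formally required, since ``deconvolution'' in this paper is used in the operational sense of undoing the unknown convolution that produced the patch correlations.
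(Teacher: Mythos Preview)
Your argument is reasonable but far more elaborate than the paper's. The paper gives a one-line algebraic factorisation: with $X=\mathrm{im2col}(x)$ and $D=Cov^{-1/2}$,
\[
x \;=\; x*\delta \;=\; X\cdot\delta \;=\; (X\cdot D)\cdot(Cov^{1/2}\delta) \;=\; (X\cdot D)\cdot k_{cov},
\]
so whitening followed by ``convolution'' with the explicit data-derived kernel $k_{cov}:=Cov^{1/2}\delta$ reproduces the identity, and whitening is therefore declared to be the inverse of convolution by $k_{cov}$ --- no Fourier analysis, no stationarity, no generative model. You instead aim to invert the \emph{original} blur $k$ of Figure~\ref{fig:deconv_motivations}, which forces you to assume a flat-spectrum latent so that $Cov$ can be identified with the autocorrelation of $k$, and then to pass through a Toeplitz/circulant approximation in the frequency domain. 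What you gain is a physical explanation of why the recovered filters look like center--surround kernels and a link to classical Wiener-style deconvolution; what you lose is exactness, since your conclusion holds only in magnitude and only up to the boundary and stationarity caveats you yourself list. Note that your first paragraph (each column of $D$ acts as a spatial filter on $x$) already contains everything needed to reach the paper's conclusion directly: from there, simply observe that combining those filter outputs with weights $k_{cov}=Cov^{1/2}\delta$ returns $x$, which is precisely the paper's identity.
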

\begin{proof}
 Let $\delta$ be the delta kernel, $x$ be an arbitrary signal, and $X=im2col(x)$.
\begin{equation}
x=x*\delta=X \cdot \delta=X\cdot Cov^{-0.5}\cdot Cov^{0.5}\cdot \delta=X\cdot Cov^{-0.5}\cdot k_{cov}=X\cdot \delta=x    
\end{equation}
The above equations show that the deconvolution operation negates the effects of the convolution using kernel $k_{cov}= Cov^{0.5} \cdot \delta$.
\end{proof}

\subsubsection{The Deconvolution Kernel}

The deconvolution kernel can be found as $Cov^{-0.5}\cdot vec(\delta)$, where $vec(:)$ is the Vectorize function, or equivalently by slicing the middle row/column of $Cov^{-0.5}$ and reshaping it into the kernel size. We visualize the deconvolution kernel from $1024$ random images from the ImageNet dataset. The kernels indeed show center-surround structure, which coincides with the biological observation (Fig. ~\ref{fig:deconv_filter}). The filter in the green channel is an on-center cell while the other two are off-center cells \citep{doi:10.1113/jphysiol.1961.sp006635,hubel1962receptive}. 

\begin{figure}
\centering
\includegraphics[width=\textwidth]{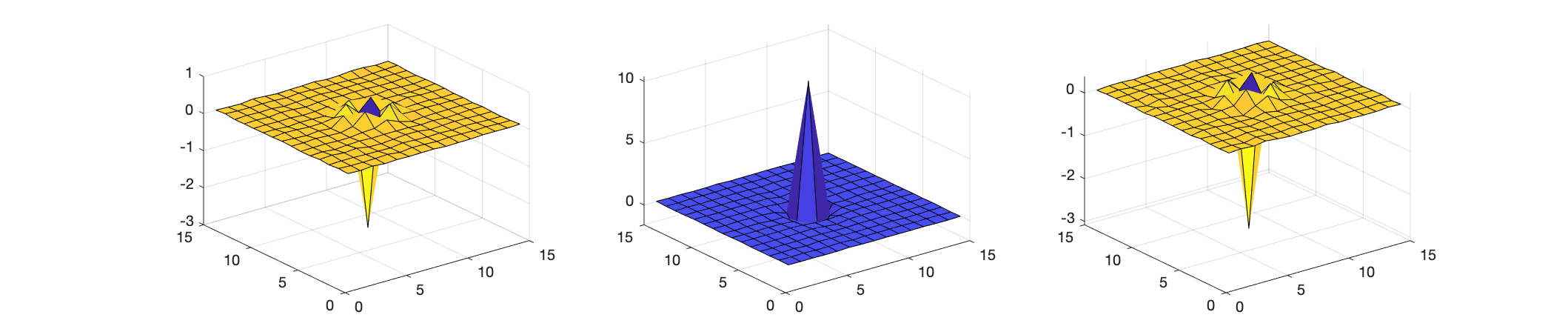}
\caption[Deconvolution kernel]{ Visualizing the $15\times 15$ deconvolution kernels from $1024$ random images from the ImageNet dataset. The kernels in the R,G,B channels consistently show center-surround structures.}
\label{fig:deconv_filter}
\end{figure}

\subsection{Optimality}

Motivated by the ubiquity of center-surround and lateral-inhibition mechanisms in biological neural systems, we now  ask if removal of redundant information, in a manner like our network deconvolution, is an optimal procedure for learning in neural networks.

\subsubsection{$L_2$ Optimizations}

There is a classic kernel estimation problem~\citep{Cho:2009:FMD:1661412.1618491,10.1093/bioinformatics/btu010} that requires solving for the kernel given the input data $X$ and the blurred output data $y$.  Because $X$ violates the optimality condition, it takes tens or hundreds of gradient descent iterations to converge to a close enough solution. We have demonstrated that our deconvolution processing is optimal for the kernel estimation, in contrast to all other normalization methods.

\subsubsection{On Training Neural Networks}

Training convolutional neural networks is analogous to a series of kernel estimation problem, where we have to solve for the kernels in each layer. Network deconvolution has a favorable near-optimal property for training neural networks. 

For simplicity, we assume the activation function is a sample-variant matrix multiplication throughout the network. The popular $ReLU$ \citep{nair2010rectified} activation falls into this category. Let $W$ be the linear transform/convolution in a certain layer, $A$  the inputs to the layer, $B$  the operation from the output of the current layer to the output of the last deconvolution operation in the network. The computation of such a network can be formulated as: $y=AWB$.

\begin{prop}Network deconvolution is near-optimal if $W$ is an orthogonal transform and if we connect the output $y$ to the $L_2$ loss.
\end{prop}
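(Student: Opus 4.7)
The plan is to mirror the proof of Proposition 1 by writing down the normal equation for $W$ directly from the $L_2$ loss and comparing it to a single gradient-descent step. Setting $L = \tfrac{1}{2}\|AWB - \hat{y}\|_F^2$, a short chain-rule computation gives $\nabla_W L = A^T(AWB - \hat{y})B^T$, so the stationarity condition $\nabla_W L = 0$ reads $(A^T A)\,W\,(BB^T) = A^T \hat{y} B^T$. When both Gram matrices are invertible, the unique minimizer is $W^\ast = (A^T A)^{-1} A^T \hat{y} B^T (BB^T)^{-1}$, which is the object I would compare against the one-step update $W_{\text{new}} = W_{\text{old}} - \tfrac{\alpha}{N}\nabla_W L$ used in Proposition 1.

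The key observation is that the two hypotheses in the proposition are exactly what is needed to collapse the two Gram matrices flanking $W$. On the left, network deconvolution applied to the input ensures $\tfrac{1}{N}A^T A = I$ by the construction in Algorithm 1. On the right, if each subsequent linear factor in $B$ is orthogonal and the downstream deconvolutions similarly normalize their inputs, then in the idealized case $BB^T = I$. Substituting both identities and taking $\alpha = 1$ reduces the update to $W_{\text{new}} = \tfrac{1}{N} A^T \hat{y} B^T$, independent of $W_{\text{old}}$, which coincides with $W^\ast$. This is the direct matrix analogue of the one-step convergence established in Proposition 1.

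The qualifier ``near'' enters through two controlled relaxations that I would make explicit. First, the deconvolution uses $(Cov + \epsilon I)^{-1/2}$ instead of the exact inverse square root, so $\tfrac{1}{N} A^T A = I + O(\epsilon)$. Second, and more importantly, $B$ is a composition of the form $B = \prod_{\ell > i} M_\ell W_\ell D_\ell$, where $W_\ell$ is orthogonal, $D_\ell$ is the next deconvolution, and $M_\ell$ is the sample-variant activation mask (diagonal $0/1$ entries for $\mathrm{ReLU}$). Since $M_\ell$ is a projection rather than an isometry, $BB^T$ can only be approximately identity; I would use $\|M_\ell\| \le 1$ together with submultiplicativity of the operator norm to bound $\|BB^T - I\| \le \delta$ in terms of the activation sparsity, and then perturb $W^\ast$ to show that the one-step update lands within $O(\delta + \epsilon)$ of the true minimizer.

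The main obstacle is precisely this propagation of orthogonality through the activation masks: because $M_\ell$ drops rank stochastically, there is no exact isometry statement available for the full $B$, and any clean bound must depend on how many coordinates each activation zeros out. The remaining ingredients — the gradient computation, the normal equation, and the collapse of the update when $A^T A$ and $BB^T$ are exactly scaled identities — are straightforward matrix analogues of the scalar argument in Proposition 1 and should not pose a serious difficulty.
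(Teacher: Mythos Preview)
Your approach is correct and is mathematically equivalent to the paper's, but the route is different. The paper vectorizes the bilinear map via the Kronecker identity $\mathrm{vec}(AWB)=(B^T\otimes A)\,\mathrm{vec}(W)$, which recasts the problem exactly in the form of Proposition~1 with design matrix $X=B^T\otimes A$; one-step optimality then reduces to $X^TX=(BB^T)\otimes(A^TA)$ being a scaled identity, which is precisely your pair of conditions $\tfrac{1}{N}A^TA=I$ and $BB^T=I$ repackaged. Your direct computation of $\nabla_W L=A^T(AWB-\hat y)B^T$ reaches the same endpoint without the Kronecker detour; the paper's version is slightly slicker because it invokes Proposition~1 verbatim rather than redoing the one-step argument in matrix form.

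Where you and the paper genuinely diverge is in justifying the ``near'' and the approximate orthogonality of $B$. The paper does \emph{not} argue via the $\epsilon$-regularization or via bounding the $\mathrm{ReLU}$ projection masks. Instead, it observes that the final output $y$ is itself the output of a deconvolution and hence has orthogonal columns; since $AW$ is orthogonal (both $A$ and $W$ are), $B$ maps an orthogonal input to an orthogonal output and is therefore declared ``approximately orthogonal.'' The residual non-optimality is attributed solely to the fact that orthogonality of $W$ is assumed rather than enforced during training. Your perturbation analysis through $\|M_\ell\|\le 1$ and the $O(\delta+\epsilon)$ bound is more quantitative and arguably more honest about where the approximation actually lives, but it is not the argument the paper gives; the paper's treatment is looser and stops at the qualitative observation above.
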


Rewriting the matrix equation from the previous subsection using the Kronecker product, we get: 
$y=AWB=(B^T \otimes A) vec(W)=Xvec(W)$, where $\otimes$ is the Kronecker product. According to the discussion in the previous subsection, gradient descent is optimal if $X=(B^T \otimes A)$ satisfies the orthogonality condition. In a network  trained with deconvolution, the input $A$ is orthogonal for each batch of data. If $B$ is also orthogonal, then according to basic properties of the Kronecker product~\citep{golub13}(Ch 12.3.1),  $(B^T \otimes A)$ is also orthogonal.
$y$ is orthogonal since it is the output of the last deconvolution operation in our network. If we assume $W$ is an orthogonal transform, then $B$ transforms orthogonal inputs to orthogonal outputs, and is approximately orthogonal. 

Slight loss of optimality incurs since we do not enforce $W$ to be orthogonal. But the gain here is that the network is unrestricted and is promised to be as powerful as any standard network. On the other hand, it is worth mentioning that many practical loss functions such as the cross entropy loss have similar shapes to the $L_2$ loss.

\begin{figure}
\centering
\includegraphics[width=0.8\textwidth]{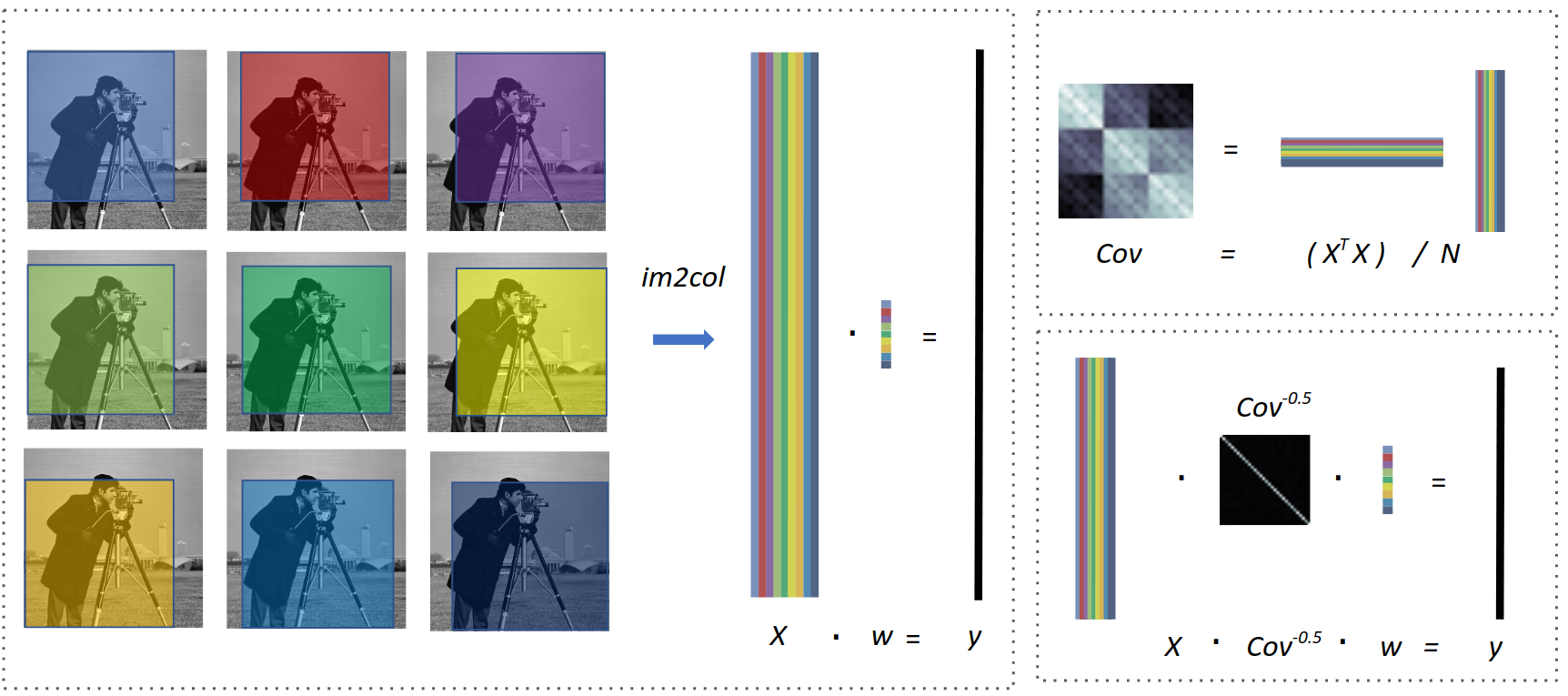}
\caption[im2col]{ (Left) Given a single channel image, and a $3\times 3$ kernel, the kernel is first flattened into a $9$ dimensional vector $w$. The $9$ image patches, corresponding to the image regions each kernel entry sees when overlaying the kernel over the image and then shifting the kernel one pixel each step, are flattened into a tall matrix $X$. It is important to note that because the patches are shifted by just one pixel, the columns of $X$ are highly correlated. The output $y$ is calculated with matrix multiplication $Xw$, which is then reshaped back into a $2D$ image.  (Top Right) In a convolution layer the matrix $X$ and $Cov$ is calculated from Algorithm~\ref{Deconv}. (Bottom Right) The pixel-wise and channel-wise correlation is removed by multiplying this X matrix with with $Cov^{-\frac{1}{2}}$, before the weight training. }
\label{fig:im2col}
\end{figure}

\subsection{Accelerations}
We note that in a direct implementation, the runtime of our training using deconvolution is slower than convolution using the wallclock as a metric. This is due to the suboptimal support in the implicit calculation of the matrices in existing libraries. We propose acceleration techniques to reduce the deconvolution cost to only a fraction of the convolution layer (Section~\ref{sec:timing}). Without further optimization, our training speed is similar to training a network using batch normalization on the ImageNet dataset while achieving better accuracy. This is a desired property when faced with difficult models ~\citep{goodfellow2014generative} and with problems where the network part is not the major bottleneck~\citep{DBLP:journals/corr/abs-1809-08625}. 

\subsubsection{Implicit Deconvolution}
Following from the associative rule of matrix multiplication, $y = X \cdot D \cdot w =X \cdot (D \cdot w)$, which suggests that the deconvolution can be carried out \textit{implicitly} by changing the model parameters, without explicitly deconvolving the data. Once we finish the training, we freeze $D$ to be the running average. This change of parameters makes a network with deconvolution perform faster at testing time, which is a favorable property for mobile applications.  We provide the practical recipe to include the bias term: $y = (X-\mu) \cdot D \cdot w +b =X \cdot (D \cdot w) + b - \mu \cdot D \cdot w$. 

\subsubsection{Fast Computation of the Covariance Matrice}
 We propose a simple $S=3-5\times$ subsampling technique that speeds up the computation of the covariance matrix by a factor of $10-20$. Since the number of involved pixels is usually large compared with the degree of freedom in a convariance matrix (Section~\ref{sec:timing}), this simple strategy provides significant speedups while maintaining the training quality. Thanks to the regularization and the iterative method we discuss below, we found the subsampling method to be  robust even when the covariance matrix is large. 
 
\subsubsection{Fast Inverse Square Root of the Covariance Matrix}
\label{sec:isqrt}
Computing the inverse square root has a long and fruitful history in computer graphics and numerical mathematics. Fast computation of the inverse square root of a scalar with Newton-Schulz iterations has received wide attention in game engines~\citep{lomont2003fast}. One would expect the same method to seamlessly generalize to the matrix case. However, according to numerous experiments, the standard Newton-Schulz iterations suffer from severe numerical instability and explosion after $\sim 20$ iterations for simple matrices (Section ~\ref{sec:Newton})~\citep{Higham1986NewtonsMF}(Eq. 7.12),~\citep{Higham:2008:FM}. Coupled Newton-Schulz iterations have been designed~\citep{Denman:1976:MSF:2612816.2613024} (Eq.6.35), ~\citep{Higham:2008:FM} and been proved to be numerically stable.

We compute the approximate inverse square root of the covariance matrix at low cost using coupled Newton-Schulz iteration, inspired by the Denman-Beavers iteration method~\citep{Denman:1976:MSF:2612816.2613024}. 
Given a symmetric positive definite covariance matrix $Cov$, the coupled Newton-Schulz iterations start with initial values $Y_0=Cov$, $Z_0=I$. The iteration is defined as: $Y_{k+1}=\frac{1}{2}Y_{k}(3I-Z_{k}Y_{k}),Z_{k+1}=\frac{1}{2}(3I-Z_{k}Y_{k})Z_{k}$, and $Y_{k}\xrightarrow{} Cov^{\frac{1}{2}}, Z_{k}\xrightarrow{} Cov^{-\frac{1}{2}}$~\citep{Higham:2008:FM} (Eq.6.35). 
Note that this coupled iteration has been used in recent works to calculate the square root of a matrix~\citep{DBLP:journals/corr/LinM17}. Instead, we take the \textit{inverse} square root from the outputs, as first shown  in~\citep{DBLP:journals/corr/abs-1809-08625}. In contrast with the vanilla Newton-Schulz method~\citep{Higham:2008:FM, DBLP:journals/corr/abs-1904-03441}(Eq. 7.12), we found the coupled Newton-Schulz iterations are stable even if iterated for thousands of times.

It is important to point out a practical implementation detail: when we have $Ch_{in}$ input feature channels, and the kernel size is $k\times k$,  the size of the covariance matrix  is $(Ch_{in}\times k \times k)\times (Ch_{in}\times k \times k)$. The covariance matrix becomes large in deeper layers of the network, and inverting such a matrix is cumbersome. We take a grouping approach by evenly dividing the feature channels $Ch_{in}$ into smaller blocks ~\citep{DBLP:journals/corr/abs-1708-00631, DBLP:journals/corr/abs-1803-08494,DBLP:journals/corr/abs-1809-08625}; we us $b$ to denote  the block size, and usually set $B = 64$.  The mini-batch covariance of a each block has a manageable size of $(B \times k \times k) \times (B \times k \times k)$. Newton-Schulz iterations are therefore conducted on smaller matrices. We notice that only a few ($\sim 5$) iterations are necessary to achieve good performance. Solving for the inverse square root takes $O((k \times k \times B)^3)$. The computation of the covariance matrix has complexity $O(H\times W \times k \times k \times B \times B \times \frac{Ch_{in}}{B}\times \frac{1}{S\times S})=O(H\times W \times k \times k \times Ch_{in} \times B \times \frac{1}{S\times S})$.  Implicit deconvolution is a simple matrix multiplication with  complextity $O(Ch_{out}\times (B \times k \times k)^2 \times \frac{Ch_{in}}{B} )$. The overall complexity is $O(\frac{H\times W \times k \times k \times Ch_{in} \times B}{S \times S} +(k \times k \times B)^3+Ch_{out}\times (B \times k \times k)^2 \times \frac{Ch_{in}}{B})$, which is usually a small fraction of the cost of the convolution operation (Section ~\ref{sec:timing}). In comparison, the computational complexity of a regular convolution layer has a complexity of $O(H\times W \times k \times k \times Ch_{in}\times Ch_{out})$. 

\begin{algorithm}[]
\caption[Computing the deconvolution matrix]{Computing the Deconvolution Matrix}
\begin{algorithmic}[1]
    \STATE{\textbf{Input:} C channels of input features $[x_1,x_2,...,x_C]$}
    \FOR{$i \in \{1,...,C\}$}
      \STATE{$X_i=im2col(x_i)$}
    \ENDFOR
    \STATE{$X=[X_1,...,X_C]$ \%Horizontally Concatenate}
    \STATE{$X=Reshape(X)$ \%Divide columns into groups}
    \STATE{$Cov=\frac{1}{N} X^t X$}
    \STATE{$D\approx (Cov+\epsilon\cdot I)^{-\frac{1}{2}}$\;}
\end{algorithmic}
\label{Deconv}
\end{algorithm}

\subsection{Sparse Representations}
Our deconvolution applied at each layer removes the pixel-wise and channel-wise correlation and transforms the original dense representations into sparse representations (in terms of heavy-tailed distributions) without losing information. This is a desired property and there is a whole field with wide applications developed around sparse representations~\citep{Hyvrinen:2009:NIS:1572513}(Fig. 7.7),~\citep{Olshausen1996,DBLP:journals/corr/abs-1305-3971}. In Fig. ~\ref{fig:Sparse}, we visualize the deconvolution operation on an input and show how the resulting representations (~\ref{fig:Sparse}(d)) are much sparser than the normalized image (~\ref{fig:Sparse}(b)). We randomly sample $1024$ images from the ImageNet and plot the histograms and log density functions before and after deconvolution (Fig.~\ref{fig:Histogram}). After deconvolution, the log density distribution becomes heavy-tailed. This  holds true also for hidden layer representations (Section~\ref{sec:sparse}). We show in the supplementary material (Section~\ref{sec:reg}) that the sparse representation makes classic regularizations more effective.

\begin{figure}
\centering
\includegraphics[width=1\columnwidth]{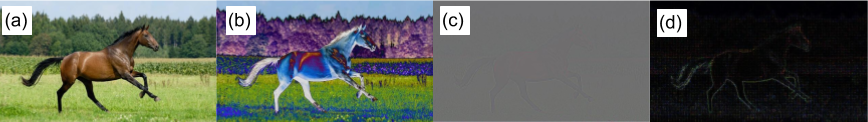}
\caption[visualization of deconvolution]{(a) The input image.  (b) The absolute value of the zero-meaned input image. (c) The deconvolved input image (min-max normalized, gray areas stands for $0$). (d) Taking the absolute value of the deconvolved image.}
\label{fig:Sparse}
\vspace{-1.5\baselineskip}
\end{figure}

\section{A Unified View}
Network deconvolution is a forward correction method that has relations to  several successful techniques in training neural networks. When we set $k=1$, the method becomes channel-wise decorrelation, as in ~\citep{DBLP:journals/corr/abs-1809-08625,huang2018decorrelated}. When $k=1, B=1$, network deconvolution is Batch Normalization~\citep{ioffe2015batch}. If we apply the decorrelation in a backward way in the gradient direction, network deconvolution is similar to $SGD2$ \citep{DBLP:journals/corr/abs-1708-00631}, natural gradient descent~\citep{NIPS2015_5953} and $KFAC$ \citep{DBLP:journals/corr/MartensG15}, while being more efficient and having better numerical properties (Section~\ref{sec:equivalence}).

\section{Experiments}

We now describe experimental results validating that network deconvolution is a powerful and successful tool for sharpening the data. Our experiments show that it outperforms identical networks using batch normalization~\citep{ioffe2015batch}, a major method for training neural networks. As we will see across all experiments, deconvolution not only improves the final accuracy but also decreases the amount of iterations it takes to learn a reasonably good set of weights in a small number of epochs.

\textbf{Linear Regression with $L_2$ loss and Logistic Regression:} As a first experiment, we ran network deconvolution on a simple linear regression task to show its efficacy. We select the Fashion-MNIST dataset.
It is noteworthy that with binary targets and the $L_2$ loss, the problem has an explicit solution if we feed the whole dataset as input. This problem is the classic kernel estimation problem, where we need to solve for $10$ optimal $28\times 28$ kernels to convolve with the inputs and minimize the $L_2$ loss for  binary targets. 
During our experiment, we notice that it is important to use a small learning rate of $0.02 - 0.1$ for vanilla $SGD$ training to prevent divergence. However, we notice that with deconvolution we can use the optimal learning rate $1.0$ and get high accuracy as well. It takes $\sim 5$ iterations to get to a low cost under the mini-batch setting (Fig.~\ref{fig:Regression}(a)). This even holds if we change the loss to logistic regression loss (Fig.~\ref{fig:Regression}(b)).

\begin{figure}[hbt!]
\centering
\includegraphics[width=0.99\columnwidth]{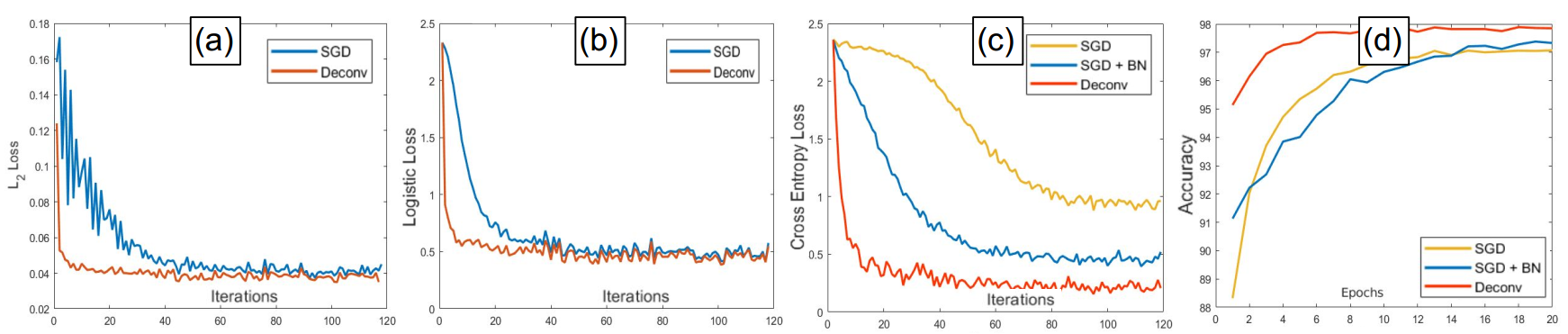}
\caption[Results on MNIST datasets]{(a-b): Regression losses on Fashion-MNIST dataset showing the effectiveness of deconvolution versus batch normalization on a non-convolutional type layer. (a) One layer, linear regression model with $L_2$ loss. (b) One layer, linear regression model with logistic regression loss.
(c-d): Results of a 3-hidden-layer Multi Layer Perceptron (MLP) network on the MNIST dataset.}
\label{fig:Regression}
\end{figure}

\renewcommand{\arraystretch}{1.5}

\begin{table}[]
\resizebox{\textwidth}{!}{
\begin{tabular}{lr|rrrrrr|rrrrrr}
             & \multicolumn{1}{l|}{}           & \multicolumn{6}{c|}{CIFAR-10}                                                                                                                                                      & \multicolumn{6}{c}{CIFAR-100}                                                                                                                                                     \\ \cline{3-14} 
             & \multicolumn{1}{c|}{Net Size} & \multicolumn{1}{c}{BN 1} & \multicolumn{1}{c}{ND 1} & \multicolumn{1}{c}{BN 20} & \multicolumn{1}{c}{ND 20} & \multicolumn{1}{c}{BN 100} & \multicolumn{1}{c|}{ND 100} & \multicolumn{1}{c}{BN 1} & \multicolumn{1}{c}{ND 1} & \multicolumn{1}{c}{BN 20} & \multicolumn{1}{c}{ND 20} & \multicolumn{1}{c}{BN 100} & \multicolumn{1}{c}{ND 100} \\ \hline
VGG-16       & 14.71M                          & 14.12\%                  & \textbf{74.18\%}             & 90.07\%                   & \textbf{93.25\%}              & 93.58\%                    & \textbf{94.56\%}                & 2.01\%                   & \textbf{37.94\%}             & 63.22\%                   & \textbf{71.97\%}              & 72.75\%                    & \textbf{75.32\%}               \\
ResNet-18    & 11.17M                          & 56.25\%                  & \textbf{72.89\%}             & 92.64\%                   & \textbf{94.07\%}              & 94.87\%                    & \textbf{95.40\%}                & 16.10\%                  & \textbf{35.73\%}             & 72.67\%                   & \textbf{76.55\%}              & 77.70\%                    & \textbf{78.63\%}               \\
Preact-18    & 11.17M                          & 55.15\%                  & \textbf{72.70\%}             & 91.93\%                   & \textbf{94.10\%}              & 94.37\%                    & \textbf{95.44\%}                & 15.17\%                  & \textbf{36.52\%}             & 70.79\%                   & \textbf{76.04\%}              & 76.14\%                    & \textbf{79.14\%}               \\
DenseNet-121 & 6.88M                           & 59.56\%                  & \textbf{76.63\%}             & 93.25\%                   & \textbf{94.89\%}              & 94.71\%                    & \textbf{95.88\%}                & 17.90\%                  & \textbf{42.91\%}             & 74.79\%                   & \textbf{77.63\%}              & 77.99\%                    & \textbf{80.69\%}               \\
ResNext-29   & 4.76M                           & 52.14\%                  & \textbf{69.22\%}             & 93.12\%                   & \textbf{94.05\%}              & 95.15\%                    & \textbf{95.80\%}                & 17.98\%                  & \textbf{30.93\%}             & 74.26\%                   & \textbf{77.35\%}              & 78.60\%                    & \textbf{80.34\%}               \\
MobileNet v2 & 2.28M                           & 54.29\%                  & \textbf{65.40\%}             & 89.86\%                   & \textbf{92.52\%}              & 90.51\%                    & \textbf{94.35\%}                & 15.88\%                  & \textbf{29.01\%}             & 66.31\%                   & \textbf{72.33\%}              & 67.52\%                    & \textbf{74.90\%}               \\
DPN-92       & 34.18M                          & 34.00\%                  & \textbf{53.02\%}             & 92.87\%                   & \textbf{93.74\%}              & 95.14\%                    & \textbf{95.82\%}                & 8.84\%                   & \textbf{21.89\%}             & 74.87\%                   & \textbf{76.12\%}              & 78.87\%                    & \textbf{80.38\%}               \\
PNASNetA     & 0.13M                           & 21.81\%                  & \textbf{64.19\%}             & 75.85\%                   & \textbf{81.97\%}              & 81.22\%                    & \textbf{84.45\%}                & 10.49\%                  & \textbf{36.52\%}             & 44.60\%                   & \textbf{55.65\%}              & 54.52\%                    & \textbf{59.44\%}               \\
SENet-18     & 11.26M                          & 57.63\%                  & \textbf{67.21\%}             & 92.37\%                   & \textbf{94.11\%}              & 94.57\%                    & \textbf{95.38\%}                & 16.60\%                  & \textbf{32.22\%}             & 71.10\%                   & \textbf{75.79\%}              & 76.41\%                    & \textbf{78.63\%}               \\
EfficientNet & 2.91M                           & 35.40\%                  & \textbf{55.67\%}             & 84.21\%                   & \textbf{86.78\%}              & 86.07\%                    & \textbf{88.42\%}                & 19.03\%                  & \textbf{22.40\%}             & 57.23\%                   & \textbf{57.59\%}              & 59.09\%                    & \textbf{62.37\%}      

\end{tabular}
}
\caption{Comparison on CIFAR-10/100 over $10$ modern CNN architectures. Models are trained for $1,20,100$ epochs using batch normalization (BN) and network deconvolution (ND). Every single model shows improved accuracy using network deconvolution.}
\label{tab:cifar}
\end{table}
\renewcommand{\arraystretch}{1.0}

\textbf{Convolutional Networks on CIFAR-10/100:} We ran deconvolution on the CIFAR-10 and CIFAR-100 datasets (Table~\ref{tab:cifar}), where we compared again the use of network deconvolution versus the use of batch normalization. Across $10$ modern network architectures for both datasets, deconvolution consistently improves convergence on these well-known datasets. There is  a wide performance gap after the first epochs of training.
Deconvolution leads to faster convergence: 20-epoch training using deconvolution leads to results that are comparable to 100-epoch training using batch normalization. 

In our setting, we remove all batch normalizations in the networks and replace them with deconvolution before each convolution/fully-connected layer. For the convolutional layers, we set $B=64$ before calculating the covariance matrix. For the fully-connected layers, we set $B$ equal to the input channel number, which is usually $512$. We set the batch size to $128$ and the weight decay to $0.001$. All models are trained with $SGD$ and a learning rate of $0.1$.

\textbf{Convolutional Networks on ImageNet:}
We tested three widely acknowledged model architectures (VGG-11, ResNet-18, DenseNet-121) from the PyTorch model zoo and find significant improvements on both networks over the reference models. Notably, for the VGG-11 network, we notice our method has led to significant improved accuracy. The top-1 accuracy is even higher than $71.55\%$, reported by the reference VGG-13 model trained with batch normalization. The improvement introduced by network deconvolution is twice as large as that from  batch normalization ($+1.36\%$). This fact also suggests that improving the training method may be more effective than improving the architecture.
 
We keep most of the default settings when training the models. We set $B=64$ for all deconvolution operations. The networks are trained for $90$ epochs with a batch size of $256$, and weight decay of $0.0001$. The initial learning rates are $0.01$,$0.1$ and $0.1$, respectively for VGG-11,ResNet-18 and DenseNet-121 as described in the paper. We used cosine annealing to smoothly decrease the learning rate to compare the curves.

 \textbf{Generalization to Other Tasks} It is worth pointing out that network deconvolution can be applied to other tasks that have convolution layers. Further results on semantic segmentation on the Cityscapes dataset can be found in (Sec.~\ref{sec:equivalence}). Also, the same deconvolution procedure for $1 \times 1$ convolutions can be used for non-convolutional layers, which makes it useful for the broader machine learning community. We constructed a 3-layer fully-connected network that has 128 hidden nodes in each layer and used the $sigmoid$ for the activation function. We compare the result with/without batch normalization, and deconvolution, where we remove the correlation between hidden nodes. Indeed, applying deconvolution to MLP networks outperforms batch normalization, as shown in Fig.~\ref{fig:Regression}(c,d). 

\renewcommand{\arraystretch}{1.1}

\begin{table}[]
\resizebox{\textwidth}{!}{
\begin{tabular}{l|crr|cr|cr}
               & \multicolumn{3}{c|}{VGG-11}                                                        & \multicolumn{2}{c|}{ResNet-18}                            & \multicolumn{2}{c}{DenseNet-121}                        \\ \cline{2-8} 
               & Original                    & \multicolumn{1}{c}{BN} & \multicolumn{1}{c|}{Deconv} & BN                          & \multicolumn{1}{c|}{Deconv} & BN                          & \multicolumn{1}{c}{Deconv} \\ \hline
ImageNet top-1 & \multicolumn{1}{r}{69.02\%} & 70.38\%                & \textbf{71.95\%}            & \multicolumn{1}{r}{69.76\%} & \textbf{71.24\%}            & \multicolumn{1}{r}{74.65\%} & \textbf{75.73\%}           \\
ImageNet top-5 & \multicolumn{1}{r}{88.63\%} & 89.81\%                & \textbf{90.49\%}            & \multicolumn{1}{r}{89.08\%} & \textbf{90.14\%}            & \multicolumn{1}{r}{92.17\%} & \textbf{92.75\%}          
\end{tabular}}
\caption[Comparison of accuracies on the ImageNet dataset]{Comparison of accuracies of  deconvolution with the model zoo implementation of VGG-11, ResNet-18, DenseNet-121 on ImageNet with batch normalization using the reference implementation on PyTorch. For VGG-11, we also include the performance of the original network without batch normalization.}
\vspace{-1.5\baselineskip}
\end{table}
\renewcommand{\arraystretch}{1}


\section{Conclusion}

In this paper we presented network deconvolution, a novel decorrelation method tailored for convolutions, which is inspired by the biological visual system. Our method was evaluated extensively and shown to improve the optimization efficiency over standard batch normalization. We provided a thorough  analysis of its performance and demonstrated consistent performance improvement of the deconvolution operation on multiple major benchmarks given $10$ modern neural network models. Our proposed deconvolution operation is straightforward in terms of implementation and can serve as a good alternative to batch normalization.

\bibliography{references}
\bibliographystyle{iclr2020_conference}

\appendix
\section{Appendix}

\subsection{Source Code}

Source code can be found at: 

https://github.com/yechengxi/deconvolution

The models for CIFAR-10/CIFAR-100 are adapted from the following repository:

https://github.com/kuangliu/pytorch-cifar.

\subsection{Generalization to Semantic Segmentation}
To demonstrate  the applicability of network deconvolution to different tasks, we modify a baseline architecture of DeepLabV3 with a ResNet-50  backbone for semantic segmentation. We remove the batch normalization layers in both the backbone network and the head network and pre-apply deconvolutions in all the convolution layers. The full networks are trained from scratch on the Cityscape dataset (with $2,975$ training images) using a learning rate of $0.1$ for $30$ epochs with batch size 8. All  settings are the same with the official PyTorch recipe except we have raised the learning rate from $0.01$ to $0.1$ for training from scratch. Here we report the mean intersection over union (mIoU), pixel accuracy and training loss curves of standard training and deconvolution  using a crop size of $480$. Network deconvolution significantly improves the training quality and accelerates the convergence (Fig. ~\ref{fig:SemSeg}).

\begin{figure}[hbt!]
\centering
\subfigure[]{\includegraphics[width=1.8in]{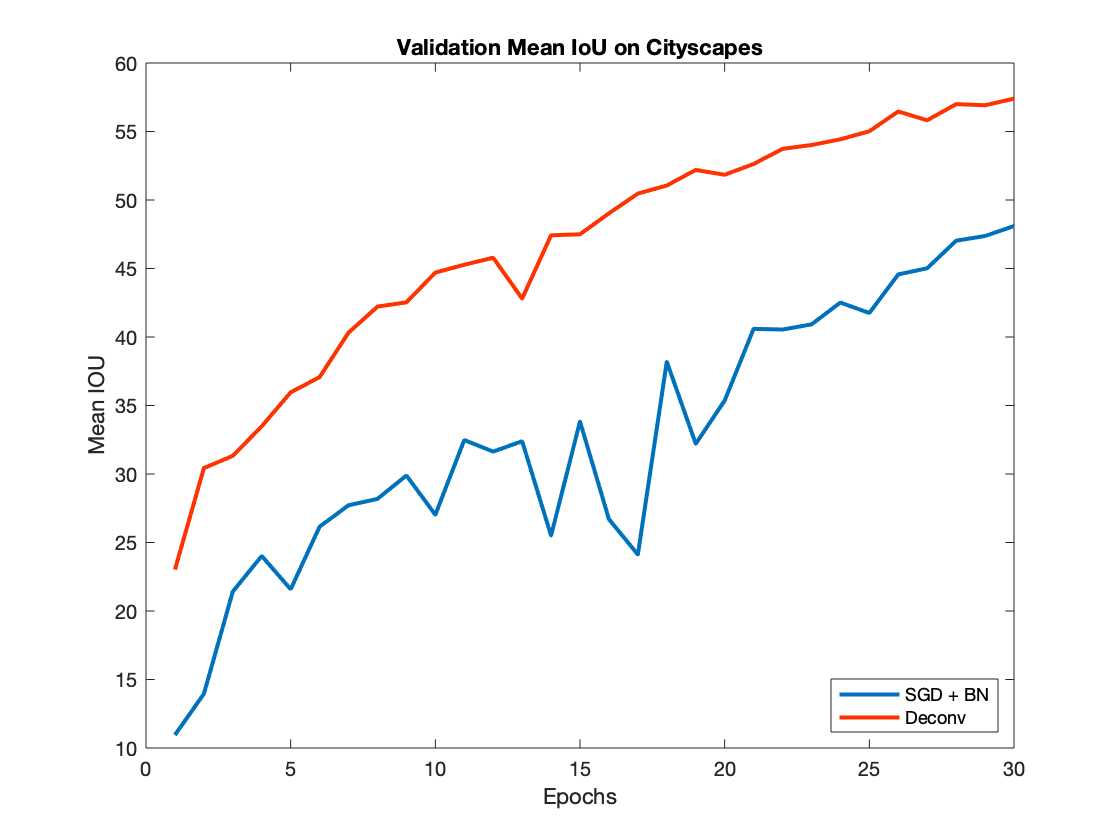}}
\subfigure[]{\includegraphics[width=1.8 in]{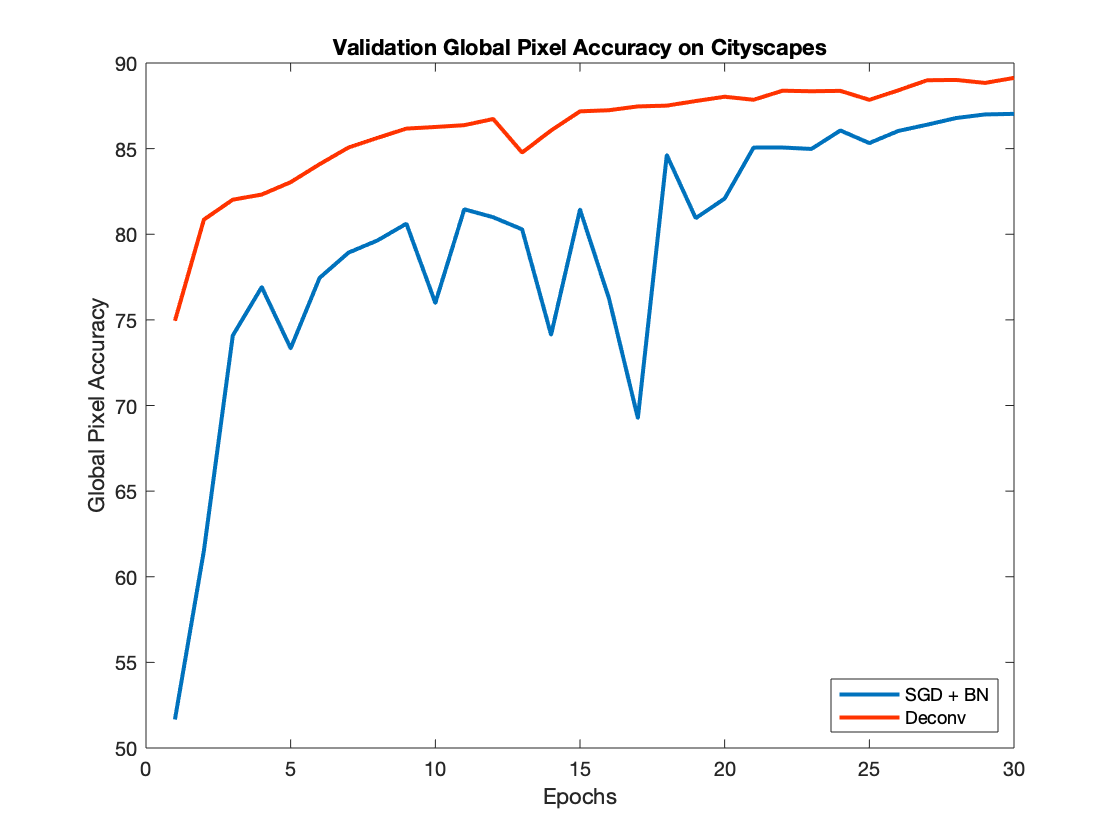}}
\subfigure[]{\includegraphics[width=1.8 in]{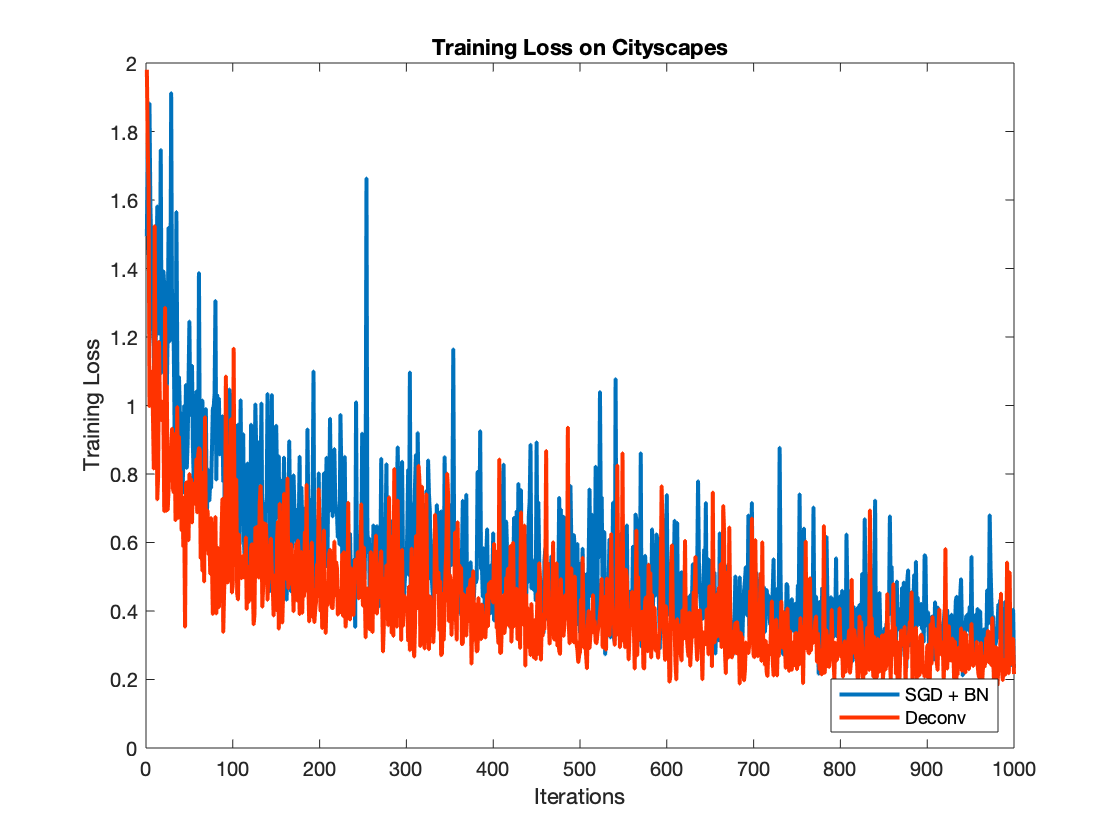}}
\caption[SemSeg]{The mean IoU, pixel-wise accuracy and training loss on the Cityscapes dataset using DeepLabV3 with a ResNet-50 backbone. In our setting, we modified all the convolution layers to remove batch normalizations and  insert deconvolutions.}
\label{fig:SemSeg}
\end{figure}

\subsection{Coupled/Uncoupled Newton Schulz Iterations}
\label{sec:Newton}

\begin{figure}[hbt!]
\centering
\includegraphics[width=2.7in]{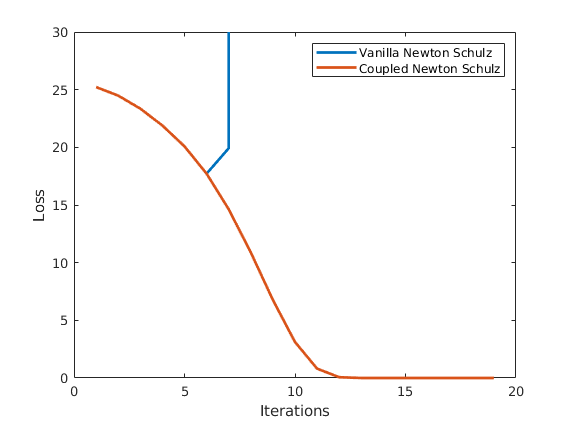}
\caption[Comparrison of Newton Schulz Iterations]{Comparison of coupled/uncoupled Newton-Schulz iterations on a $27 \times 27$ covariance matrix constructed from the Lenna Image.}
\label{fig:newton}
\end{figure}
We take the Lenna image and construct the $27\times 27$ covariance matrix using pixels from $3\times 3$ windows in $3$ color channels.
We apply the vanilla Newton-Schulz iteration and compare it with the coupled Newton-Schulz iteration. The Frobenius Norm  of $D\cdot\ D \cdot Cov -I$  is plotted in Fig.~\ref{fig:newton}. The rounding errors quickly accumulate with the vanilla Newton Schulz iterations, while the coupled iteration is stable. From the curve we set the iteration number to $15$ for the first layer of the network to thoroughly remove the correlation in the input data. We freeze the deconvolution matrix $D$ after $200$ iterations. For the middle layers of the network we set the iteration number to $5$.

\subsection{Regularizations}
\label{sec:reg}
If two features correlate, weight decay regularization is less effective. If $X_1, X_2$ are strongly correlated features, but differ in scale, and if we look at: $w_1 X_1+w_2 X_2$, the weights are likely to co-adapt during the training, and weight decay is likely to be more effective on the larger coefficient. The other, small coefficient is left less penalized. Network deconvolution reduces the co-adaptation of weights, and weight decay becomes less ambiguous and more effective. Here we report the accuracies of the VGG-13 network on the CIFAR-100 dataset using weight decays of $0.005$ and $0.0005$. We notice that a stronger weight decay is detrimental to the performance with standard training using batch normalization. In contrast,  the network achieves better performance with deconvolution using a stronger weight decay. Each setting is repeated  for 5 times, and the mean accuracy curves with confidence intervals of (+/- 1.0 std) are shown in Fig.~\ref{fig:WeightDecay_Loss}(a).

\begin{figure}[hbt!]
\centering
\subfigure[]{\includegraphics[width=2.7in]{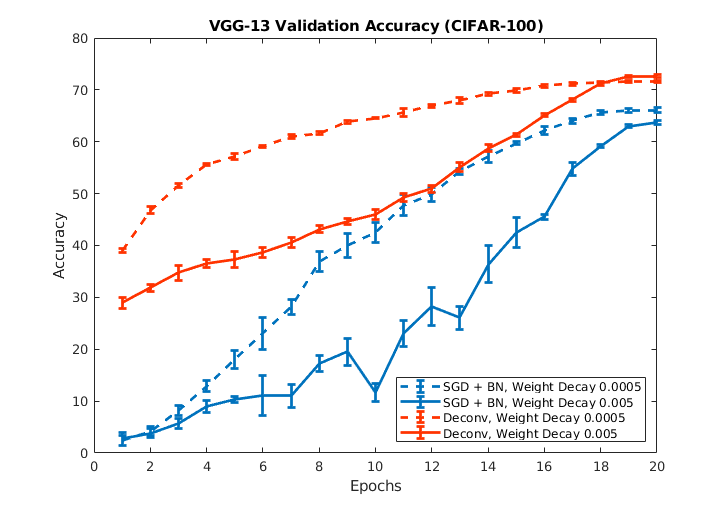}}
\subfigure[]{\includegraphics[width=2.7in]{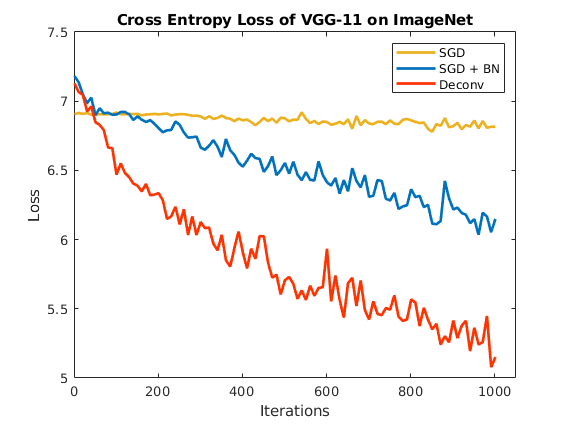}}
\caption[The effects of weight decay w/w.o deconvolution]{(a) The effects of weight decay on stochastic gradient descent (SGD) and batch normalization (BN) versus SGD and deconvolution (Deconv), training on the CIFAR-100 dataset on the VGG-13 network. Here we notice that increased weight decay leads to worse results for standard training. However, in our case with deconvolution, the final accuracy actually improves with increased weight decay (.0005 to .005). Each experiment is repeated  5 times. We show the confidence interval (+/- 1 std). (b)The training loss of the VGG-11 network on the ImageNet dataset. Only the first 1000 iterations are shown. Comparison is made among SGD, SGD with batch normalization and deconvolution.}
\label{fig:WeightDecay_Loss}
\end{figure}

\subsection{Sparse Representations for Convolution Layers}
\label{sec:sparse}
\begin{figure}[hbt!]
\centering
\includegraphics[width=.5\columnwidth]{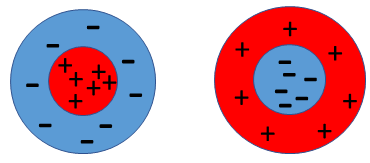}
\caption[An on-center cell and off-center cell.]{An on-center cell and off-center cell found in animal vision system. The on-center cell (left) responds maximally when a stimuli is given at the center and a lack of stimuli is given in a circle surrounding it. The off-center cell (right) responds in the opposite way.}
\label{fig:cells}
\end{figure}

\begin{figure}[hbt!]
\centering
\subfigure[]{\includegraphics[width=.49\columnwidth]{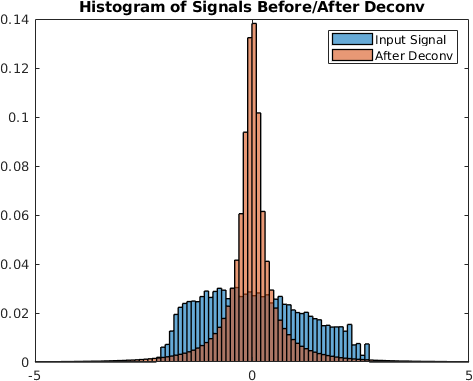}}
\subfigure[]{\includegraphics[width=.49\columnwidth]{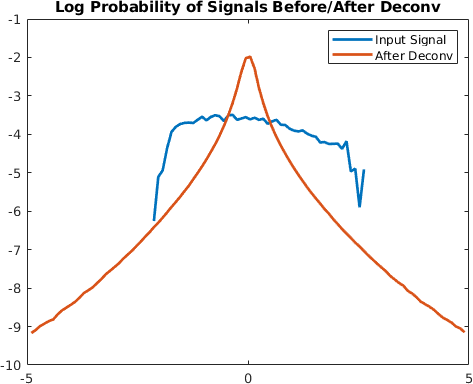}}
\caption[Histogram of Signals]{(a) Histograms of input signals (pixel values) before/after deconvolution. (b) Log density of the input signals before/after deconvolution.) The x-axis represents the normalized pixel value. After applying deconvolution, the resulting data is much sparser, with most values being zero. }
\label{fig:Histogram}
\end{figure}

Network deconvolution reduces redundancy similar to the that in the animal vision system (Fig.~\ref{fig:cells}).
The center-surround antagonism results in efficient and sparse representations. In Fig.~\ref{fig:Histogram} we plot the distribution and log density of the signals at the first layer before and after deconvolution. The distribution after the deconvolution has a well-known heavy-tailed shape~\citep{Hyvrinen:2009:NIS:1572513,DBLP:journals/corr/abs-1305-3971}. 

Fig.~\ref{fig:SparseFeatures} shows the inputs to the $5$-th convolution layer in $VGG-11$. This input is the output of a $ReLU$ activation function. The deconvolution operation removes the correlation between channels and nearby pixels, resulting in a sharper and sparser representation. 

\begin{figure}[hbt!]
\centering
\subfigure[]{\includegraphics[width=2.7in]{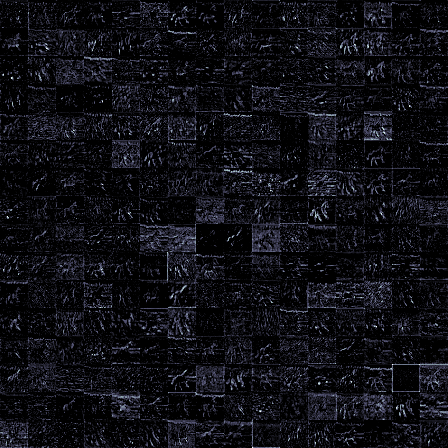}}
\subfigure[]{\includegraphics[width=2.7in]{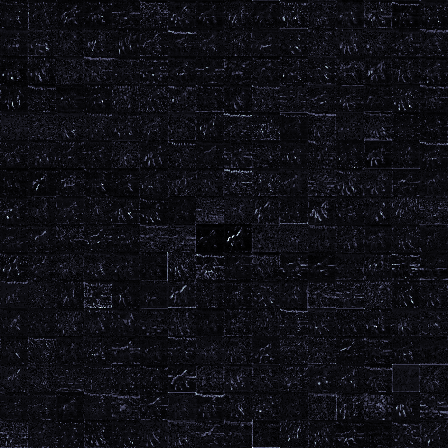}}
\caption[Visualization of intermediate feature maps]{(a)Input features to the $5$-th convolution layer in VGG-11. (b) Taking the absolute value of the deconvolved features. The features have been min-max normalized for visualization.(Best view on a display.)}
\label{fig:SparseFeatures}
\end{figure}

\subsection{Performance Breakdown}
\label{sec:timing}
Network deconvolution is a customized new design and relies on less optimized functions such as $Im2col$. Even so, the slow down is tunable to be around $\sim 10-30\%$ on modern networks. We plot the walltime vs accuracy plot of the VGG network on the ImageNet dataset. For this plot we use $B=16,S=4$ (Fig. ~\ref{fig:walltime}). 
Here we also break down the computational cost on CPU to show deconvolution is a low-cost and promising approach if properly optimized on GPUs. We take random images at various scales and set the input/output channels to common values in modern networks. The CPU timing on a batch size of $128$ can be found in Table~\ref{tab:breakdown}. Here we fix the Newton-Schulz iteration times to be $5$.

\begin{figure}[hbt!]
\centering
\includegraphics[width=2.7in]{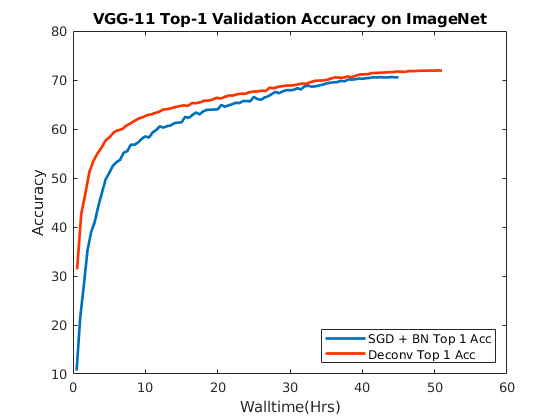}
\caption[vgg walltime]{The accuracy vs walltime curves of VGG-11 networks on the ImageNet dataset using batch normalization and network deconvolution.}
\label{fig:walltime}
\end{figure}

\begin{table}[]
\centering
\begin{tabular}{cccccccccccc}
$H$   & $W$   & $Ch_{in}$ & $Ch_{out}$ & $groups$ & $k$ & $stride$ & $Im2Col$ & $Cov$    & $Inv$     & $Conv$ \\
\hline
\hline
256 & 256 & 3      & 64      & 1    & 3  & 3      & 0.069  & 0.0079 &    0.00025 &   0.50 \\
128 & 128 & 64     & 128     & 1    & 3  & 3      & 0.315  &	0.3214 &	0.02069 &	0.67 \\
64  & 64  & 128    & 256     & 2    & 3  & 3      & 0.045  &	0.0445 &	0.00076 &	0.55 \\
32  & 32  & 256    & 512     & 4    & 3  & 3      & 0.022  &	0.0391 &	0.00222 &	0.48 \\
16  & 16  & 512    & 512     & 8    & 3  & 3      & 0.011  &	0.0444 &	0.01155 &	0.23 \\
128 & 128 & 64     & 128     & 64   & 3  & 3      & 0.376  &	0.0871 &	0.00024 &	0.66 \\
64  & 64  & 128    & 256     & 128  & 3  & 3      & 0.042  &	0.0412 &	0.00082 &	0.53 \\
32  & 32  & 256    & 512     & 256  & 3  & 3      & 0.023  &	0.0377 &	0.00208 &	0.47 \\
16  & 16  & 512    & 512     & 512  & 3  & 3      & 0.011  &	0.0437 &	0.01194 &	0.22 \\
128 & 128 & 64     & 128     & 32   & 3  & 3      & 0.360  &	0.0939 &	0.00021 &	0.67 \\
64  & 64  & 128    & 256     & 32   & 3  & 3      & 0.044  &	0.0425 &	0.00076 &	0.55 \\
32  & 32  & 256    & 512     & 32   & 3  & 3      & 0.023  &	0.0374 &	0.00204 &	0.46 \\
16  & 16  & 512    & 512     & 32   & 3  & 3      & 0.011  &	0.0421 &	0.01180 &	0.22 \\
256 & 256 & 3      & 64      & 1    & 3  & 5      & 0.030  &	0.0046 &	0.00029 &	0.49 \\
128 & 128 & 64     & 128     & 1    & 3  & 5      & 0.153  &	0.1284 &	0.01901 &	0.66 \\
256	& 256	& 3	   & 64		& 1	    & 7  & 3	  & 0.338  &    0.1111 &    0.00107	&   0.69 \\
256	& 256	& 3	   & 64		& 1	    & 7	 & 5	  & 0.180  &    0.0408 &    0.00107	&   0.69 \\
256	& 256	& 3	   & 64		& 1	    & 7	 & 7	  & 0.063  &    0.0210 &    0.00104	&   0.70 \\
256	& 256	& 3	   & 64		& 1	    & 11 & 3	  & 0.681  &    0.4810 &    0.00479	&   0.99 \\
256	& 256	& 3	   & 64		& 1	    & 11 & 5	  & 0.315  &    0.1909 &    0.00488	&   1.00 \\
256	& 256	& 3	   & 64		& 1	    & 11 & 7	  & 0.199  &    0.1022 &    0.00494	&   0.99 \\
256	& 256	& 3	   & 64		& 1	    & 11 & 11	  & 0.069  &    0.0420 &    0.00496	&   1.00

\end{tabular}
\caption[Perf]{Breakdown of deconvolution layer component computation time (in sec., measured on CPU) against various layer parameters. The batch size is set to $128$, $H \times W$ are layer dimensions, $Ch_{in}$ - number of input channels, $Ch_{out}$ - number of output channels, $groups$ - the number of channel groups, $k$ is the kernel size and $stride$ is the sampling stride.}
\label{tab:breakdown}
\end{table}

\subsection{Accelerated Convergence}
We demonstrate the loss curves using different settings when training the VGG-11 network on the ImageNet dataset(Fig.~\ref{fig:WeightDecay_Loss}(b)). We can see network deconvolution leads to significantly faster decay in training loss.

\subsection{Forward Backward Equivalence}
\label{sec:equivalence}
We discuss the relation between the correction in the forward way and in the backward way. We thank Prof. Brian Hunt for providing us this simple proof.

Assuming that in one layer of the network we have $X\cdot W_1=X \cdot D \cdot W_2=Y$, $ W_2= D^{-1}\cdot W_1$, here $D=Cov^{-0.5}$. 

\begin{equation}
\frac{\partial Loss}{\partial W_1}=\frac{\partial Loss}{\partial Y}  \circ \frac{\partial Y} {\partial W_1}=X^t\cdot \frac{\partial Loss}{\partial Y}. 
\end{equation}

Assuming $D$ is fixed,
\begin{equation}
\frac{\partial Loss}{\partial W_2}=\frac{\partial Loss}{\partial Y}  \circ \frac{\partial Y} {\partial W_2}=(X\cdot D)^t\cdot \frac{\partial Loss}{\partial Y}. 
\end{equation}

One iteration of gradient descent with respect to $W_1$ is:
\begin{equation}
W_1^{new}=W_1^{old}-\alpha \frac{\partial Loss}{\partial W_1^{old}}=
W_1^{old}- \alpha X^t\cdot \frac{\partial Loss}{\partial Y}
\end{equation}

$\frac{\partial Loss}{\partial W_2}=\frac{\partial Loss}{\partial Y}  \circ \frac{\partial Y} {\partial W_2}=(X\cdot D) ^t\cdot \frac{\partial Loss}{\partial Y}$. 

One iteration of gradient descent with respect to $W_2$ is:
\begin{equation}
W_2^{new}=W_2^{old}-\alpha \frac{\partial Loss}{\partial W_2^{old}}
\end{equation}

\begin{equation}
D^{-1} \cdot W_1^{new}= D^{-1} W_1^{old}-\alpha \frac{\partial Loss}{\partial Y} \circ  \frac{\partial Y}{\partial W_2^{old}}= D^{-1} W_1^{old}-\alpha (X\cdot D)^t\cdot \frac{\partial Loss}{\partial Y} .
\end{equation}

We then reach the familiar form~\citep{DBLP:journals/corr/abs-1708-00631}:
\begin{equation}
W_1^{new}= W_1^{old}-\alpha D^{2} \cdot X^t\cdot \frac{\partial Loss}{\partial Y}=  W_1^{old}-\alpha Cov^{-1} (\cdot X^t\cdot \frac{\partial Loss}{\partial Y})
\end{equation}

We have proved the equivalence between forward correction and the Newton's method-like backward correction. Carrying out the forward correction as in our paper is beneficial because as the neural network gets deep, $X$ gets more ill-posed. Another reason is that because $D$ depends on $X$, the layer gradients are more accurate if we include the inverse square root into the back propagation training. This is easily achievable with the help of automatic differentiation implementations: 

\begin{equation}
\frac{\partial x_{i+1}}{\partial x_{i}} = 
D_{i+1} \circ f_{i} \circ W_i+ \frac{\partial D_{i+1}}{\partial x_i} \circ f_{i} \circ W_i
\end{equation}

Here $x_i$ is an input to the current layer and $x_{i+1}$ is the input to the next layer.

\subsection{Influence of Batch Size}
We notice network deconvolution works well under various batch sizes. However, different settings need to be adjusted to achieve optimal performance. High learning rates can be used for large batch sizes, small learning rates should be used for small batch sizes. When the batch size is small, to avoid overfitting the noise, the number of Newton-Schulz iterations should be reduced and the regularization factor $\epsilon$ should be raised. More results and settings can be found in Table~\ref{tab:batchsize}.

\begin{table}[]
\centering
\begin{tabular}{rrrrr}
\multicolumn{1}{c}{$Batch Size$} & \multicolumn{1}{c}{$Acc$} & \multicolumn{1}{c}{$LR$} & \multicolumn{1}{c}{$\epsilon$} & \multicolumn{1}{c}{$Iter$} \\ \hline
2                              & 89.12\%                        & 0.001                  & 0.01                     & 2                        \\
8                              & 91.26\%                   & 0.01                   & 0.01                     & 2                        \\
32                             & 91.18\%                  & 0.01                   & 1e-5                    & 5                        \\
128                            & 91.56\%                   & 0.1                    & 1e-5                    & 5                        \\
512                            & 91.66\%                   & 0.5                    & 1e-5                    & 5                        \\
2048                           & 90.64\%                   & 1                      & 1e-5                    & 5                       
\end{tabular}
\caption[BatchSize]{Performance and settings under different batch sizes.}
\label{tab:batchsize}
\end{table}

\subsection{Implementation of FastDeconv in PyTorch}
We present the reference implementation in PyTorch. "FastDeconv" can be used to replace instances of "nn.Conv2d" in the network architectures. Batch normalizations should also be removed.  
\newpage
\includepdf[pages=-]{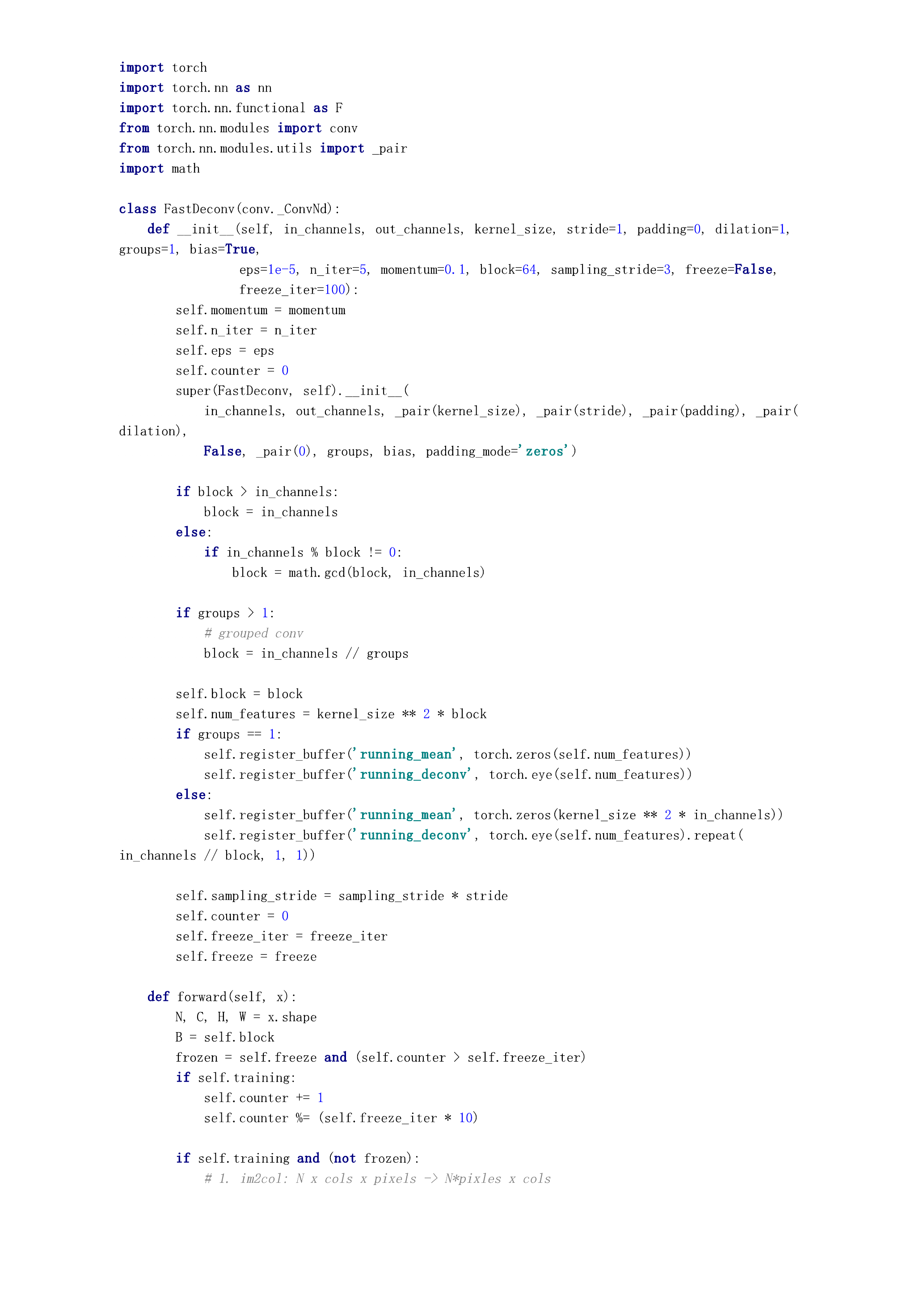}

\end{document}